%% file: neurips_2026.tex
\documentclass{article}

\usepackage[preprint, nonatbib]{neurips_2026}
\workshoptitle{NeurIPS 2026 Workshop on AI for Stochastic Dynamics: From Theoretical Foundations to Scientific Applications}

\usepackage[utf8]{inputenc} 
\usepackage[T1]{fontenc}    
\usepackage{hyperref}       
\usepackage{url}            
\usepackage{booktabs}       
\usepackage{amsfonts}       
\usepackage{nicefrac}       
\usepackage{microtype}      
\usepackage{xcolor}         
\usepackage{amsmath}
\usepackage{booktabs}
\usepackage{longtable}
\usepackage{pdflscape}
\usepackage{array}
\usepackage{amsthm}
\usepackage{graphicx}
\usepackage{subcaption}
\usepackage{enumitem}

\newtheorem{corollary}{Corollary}
\newtheorem{proposition}{Proposition}
\newtheorem{definition}{Definition}

\usepackage[natbib=true]{biblatex}
\DeclareMathOperator*{\argmax}{arg\,max}
\setlist{leftmargin=1.5em}

\title{Answer-Distribution Trajectories: \\ A Stochastic-Dynamics View of LLM Reasoning}

\author{%
  Mar Gonzàlez I Català \\
  University of Cambridge\\
  \texttt{mg2211@cam.ac.uk} \\
  \And
  Haitz Sáez de Ocáriz Borde \\
  University of Cambridge \\
  \texttt{hs788@cam.ac.uk}
  \And
  Davide Murari \\
  University of Cambridge \\
  \texttt{dm2011@cam.ac.uk}
  \And
  Carola Bibiane-Schönlieb \\
  University of Cambridge \\
  \texttt{cbs31@cam.ac.uk} \\
  \And
  Pietro Liò \\
  University of Cambridge \\
  \texttt{pl219@cam.ac.uk}
  \And
  George D. Montañez \\
  Harvey Mudd College \\
  \texttt{gmontanez@g.hmc.edu} \\
}

\begin{document}

\maketitle

\begin{abstract}
Chain-of-thought reasoning provides a structured computation between a model’s input and final answer. Yet it is often evaluated through endpoint accuracy, which ignores the path taken to reach that answer. An emerging line of work addresses this limitation using entropy profiles, which track how uncertainty evolves over the reasoning process but do not reveal which competing hypotheses account for that uncertainty. We introduce answer-distribution trajectories, a stochastic-dynamics-inspired representation that tracks the model’s full predictive distribution over answers as reasoning unfolds. As a strictly finer representation than endpoint and entropy summaries, answer-distribution trajectories enable us to characterize a trace through a dynamical reasoning profile spanning exploration, revision, motion, and commitment, and to distinguish different dynamical mechanisms of reasoning success and failure. Across sixteen open-weight language models and four reasoning benchmarks, we show that traces with the same endpoint and similar entropy profiles can exhibit substantially different reasoning dynamics. We further find substantial variation in these dynamics both within and across models and tasks, with different objectives favoring different dynamical profiles. Additionally, we show that training and inference choices systematically reshape these profiles. Our results suggest that answer-distribution trajectories provide a rich framework for analysing and evaluating the dynamics of LLM reasoning.
\end{abstract}

\input{main}

\printbibliography

\newpage

\appendix
\input{AppendixA_experimental_setup}
\input{AppendixB_reproduction_figures_and_tables}
\input{AppendixC_proofs}
\input{AppendixD_licences_impact_statement}

\end{document}

%% file: main.tex
\section{Introduction}
\label{sec:introduction}

Chain-of-thought (CoT) reasoning first emerged as a powerful capability of large language models: when prompted to produce intermediate natural-language steps, sufficiently capable models could substantially improve performance on multi-step reasoning tasks \cite{kojima2022large, wei2022chain}. What began as a prompting technique has since become integrated into model training and post-training, leading to modern reasoning models that are explicitly optimized to generate extended intermediate computations before producing an answer \cite{Guo_2025, openai2026openaio1card}. In this setting, CoT is part of the model's inference-time computation, providing an observable trace of the model’s reasoning. Despite having access to this trace, reasoning models are still mostly evaluated by whether the final answer is correct, without accounting for the reasoning that produced it \cite{mao2024champ, prasad2023receval,xia2025evaluating,uesato2022solving}.

Recent work has therefore begun to study CoT traces and signals derived from them to better understand how a model arrives at its answer \cite{korbak2025chain, lightman2024let, wang2024multi}. In particular, a growing line of work examines entropy profiles, tracking how uncertainty over the next token or final answer evolves during CoT generation. These profiles have been used to guide exploration and early stopping \cite{sharma2025thinkjustenoughsequencelevel, zhang2025entropy}, identify critical decision points \cite{qian2025demystifyingreasoningdynamicsmutual, NEURIPS2025_a797c2d2}, and detect reasoning failures \cite{farquhar2024detecting, ton2025understandingchainofthoughtllmsinformation}. However, entropy is a limited signal: it captures how probability mass is distributed, but not which hypotheses carry that probability mass \cite{hullermeier2021aleatoric, wimmer2023quantifying}.

To address these limitations, we propose analyzing reasoning through \emph{answer-distribution trajectories}, which track a model’s predictive distribution over possible final answers as the CoT is generated (Figure~\ref{fig:answer-distribution-trajectory}). For a given trace, we estimate this distribution at selected prefixes by conditioning on the reasoning generated so far and sampling independent continuations. The resulting trajectory contains strictly more information than final-answer accuracy or entropy, as it preserves which hypotheses are supported, how that support evolves, and where it ultimately settles.

\begin{figure*}[t]
    \centering
    \includegraphics[width=\textwidth]{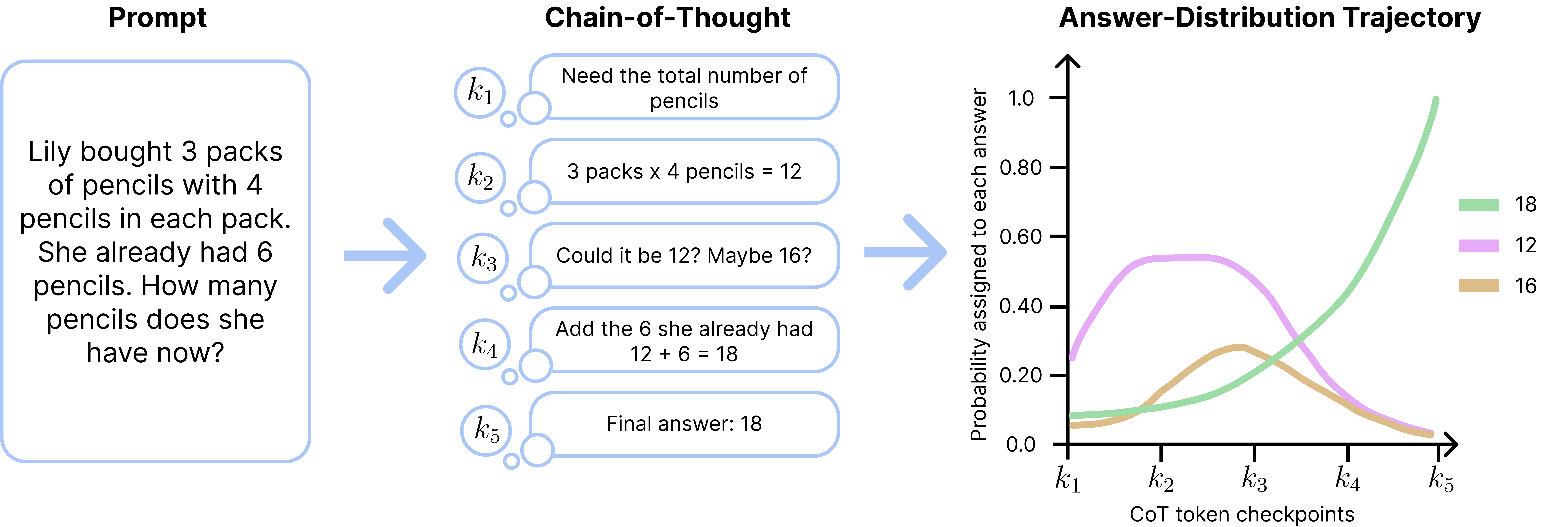}
    \caption{
        \textbf{Answer-distribution trajectories reveal how predictive mass moves among
        competing hypotheses during reasoning.}
        Here, the answer $12$ initially dominates before losing support as probability shifts toward the answer $18$. This switch in the dominant answer is not recoverable from the final answer or entropy alone.
    }
    \label{fig:answer-distribution-trajectory}

\end{figure*}

Answer-distribution trajectories allow us to define trajectory-level metrics that capture different aspects of the reasoning dynamics. We group these metrics along four dimensions: exploration, revision, motion, and commitment, which together form a \emph{dynamical profile} of a given reasoning trajectory. 
We then track the dominance of the correct answer along each trajectory to distinguish different mechanisms of success and failure. We apply this framework to a diverse collection of open-weight language models and four discrete-answer benchmarks and find substantial heterogeneity in reasoning dynamics, even among traces with the same endpoint and similar entropy profiles. We further show that different objectives favor different dynamical profiles, and that training and inference choices systematically reshape these profiles.

Our contributions focus on the following three questions:
\begin{itemize}
    \item \textbf{What do conventional summaries of reasoning miss?} We show both theoretically and empirically that similar endpoint accuracy and entropy profiles can arise from different answer-distribution trajectories.
    \item \textbf{Do models exhibit systematic differences in reasoning dynamics?} We construct dynamical profiles of reasoning trajectories and use them to reveal systematic differences in how final answers are reached within and across model families and tasks.
    \item \textbf{How do training and inference choices reshape these dynamics?} We analyze how changes in training checkpoint, scale, and decoding temperature affect dynamical profiles. 
\end{itemize}

\section{Related work}

\paragraph{Studying reasoning beyond the final answer.}
Final-answer accuracy does not reveal how a model arrived at its prediction, motivating work that studies the reasoning process itself \cite{prasad2023receval,xia2025evaluating}. A natural observable signal is Chain-of-Thought (CoT), which exposes a sequence of intermediate natural-language steps before the final answer \cite{kojima2022large, wei2022chain}. CoT's textual content can be monitored for undesirable behavior \cite{baker2025monitoring, emmons2025chainthoughtnecessarylanguage, korbak2025chain}, trace length provides a signal of reasoning confidence \cite{devic2025tracelengthsimpleuncertainty}, agreement across independently sampled chains is a measure of solution robustness \cite{wang2023selfconsistency}, and step-level verifier scores have been developed as signals of local reasoning validity \cite{uesato2022solving, lightman2024let}. However, CoT may not reflect the actual reasons that determined the final answer \cite{turpin2023language, arcuschinchain, lanham2023measuring}. Complementary work instead studies the model’s internal computations directly, probing hidden representations for information about truthfulness, hallucination, future outputs, and latent reasoning concepts \cite{azaria2023the, marks2024geometrytruthemergentlinear, chen2024insidellmsinternalstates, pal2023future, gurnee2026verbalizable}, or intervening on these representations to steer model behavior \cite{zou2025representationengineeringtopdownapproach, turner2024steeringlanguagemodelsactivation}. In our work, we propose to study the reasoning process by tracking the evolution of the model's predictive distribution over candidate final answers as reasoning unfolds.

\paragraph{Information-theoretic views of reasoning.}
A growing line of work studies information-theoretic signals at different points in the reasoning process. Information gain, mutual information, and token entropy have been used to identify important points of the reasoning process \cite{ton2025understandingchainofthoughtllmsinformation,qian2025demystifyingreasoningdynamicsmutual,NEURIPS2025_a797c2d2}. Entropy-based signals have been used to adapt exploration depth \cite{zhang2025entropy}, compress redundant reasoning steps \cite{ICLR2026_40b2844d}, and inform early stopping \cite{sharma2025thinkjustenoughsequencelevel,hosseini2026early}. Other work studies the temporal shape of uncertainty: intermediate-answer confidence exhibits different dynamics in correct and incorrect rollouts \cite{hosseini2026early}, early entropy trajectories characterize different reasoning regimes \cite{xia2026llms}, and theoretical work relates conditional answer-entropy dynamics to the accumulation of answer-relevant information \cite{català2026stepwiseinformativenessassumptionentropy}. However, entropy is limited because it does not capture which hypotheses carry the probability mass. To address this limitation, we instead study the full predictive distribution over candidate answers throughout reasoning.

\section{Answer-distribution trajectories: a stochastic-dynamics view of reasoning}
\label{sec:framework}

In this section, we formalize \emph{answer-distribution trajectories} and establish how they relate to endpoint prediction and entropy profiles.

\subsection{Answer-distribution trajectories}
\label{sec:trajectories}

We first define a reasoning model's predictive distribution over reasoning traces and final answers.

\begin{definition}[Model predictive distribution]
Given a query \(Q\), a reasoning model with parameters \(\theta\) generates a sequence of intermediate reasoning tokens
\(
C_{1:K} = (C_1,\dots,C_K)
\)
followed by an answer sequence
\(
A_{1:T} = (A_1,\dots,A_T),
\)
where $K$ and $T$ are stochastic sequence lengths. The corresponding autoregressive distribution over reasoning traces is given by 
$p_\theta(C_{1:K}\mid Q)
=
\prod_{k=1}^{K}
p_\theta(C_k\mid Q,C_{1:k-1}).$
Conditioned on a complete reasoning trace \(C_{1:K}\), the answer distribution factorizes as
$
p_\theta(A_{1:T}\mid Q,C_{1:K})
=
\prod_{t=1}^{T}
p_\theta(A_t\mid Q,C_{1:K},A_{1:t-1}).
$
\label{def:modelpredictive}
\end{definition}

In our empirical analysis, we apply a deterministic parser that maps each generated answer sequence \(A_{1:T}\) to a single discrete answer label \(Y \in \mathcal{A}\), where \(\mathcal{A}\) denotes the discrete answer space. We use this parsed answer label in the definitions that follow.

\begin{definition}[Prefix-conditioned predictive distribution]
For a prefix  \(C_{1:k}\), with \(k>0\), the \emph{prefix-conditioned predictive distribution} over discrete answer labels is
\(
p_k(a)
=
p_\theta(Y=a\mid Q,C_{1:k}),
\)
\(
a\in\mathcal{A}.
\)
This distribution marginalizes over all possible future reasoning continuations:
\(
p_k(a)
=
\sum_{c_{>k}}
p_\theta(C_{>k}=c_{>k}\mid Q,C_{1:k})
\,
p_\theta(
Y=a
\mid
Q,C_{1:k},C_{>k}=c_{>k}
),
\)
where \(C_{>k}\) denotes a variable-length future reasoning continuation.
\label{def:prefixpredictive}
\end{definition}

As the reasoning trace unfolds, these prefix-conditioned distributions form a sequence of distributions over the final answer. We refer to this sequence as the model’s \emph{answer-distribution trajectory}.

\begin{definition}[Answer-distribution trajectory]
For a question \(q\) and realized reasoning trace \(c_{1:K}\), we define the \emph{answer-distribution trajectory} as
$
\mathcal{T}(q,c)
=
\left(p_0,p_1,\ldots,p_K\right)
$,
where \(p_k\) denotes the prefix-conditioned predictive distribution associated with the reasoning prefix \(c_{1:k}\) and the query \(q\), as defined in Definition~\ref{def:prefixpredictive}.
\end{definition}

\subsection{Answer-distribution trajectories provide a finer representation}
\label{sec:information}

The answer-distribution trajectory retains substantially more structure than commonly used summaries of reasoning such as endpoint predictions or scalar uncertainty. We formalize this relationship below. Proofs of the theoretical results in this subsection are provided in Appendix~\ref{app:proofs}.

For an answer-distribution trajectory \(\mathcal{T}=(p_0,p_1,\ldots,p_K)\), we define its endpoint prediction as
$
E(\mathcal{T})
=
\argmax_{a\in\mathcal{A}} p_K(a),
$
and its entropy trajectory as
$
\mathcal{H}(\mathcal{T})
=
\bigl(H(p_0),H(p_1),\ldots,H(p_K)\bigr),
$
where
$
H(p)
=
-\sum_{a\in\mathcal{A}} p(a)\log p(a).
$

\begin{proposition}[Trajectory determines endpoint and entropy]
\label{prop:projection}
Both \(E\) and \(\mathcal{H}\) are deterministic functions of the answer-distribution trajectory \(\mathcal{T}\).
\end{proposition}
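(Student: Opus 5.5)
The plan is to unfold the two definitions and observe that each is the composition of a fixed map with the trajectory itself, so no information beyond \(\mathcal{T}\) is ever consulted. Concretely, I would regard \(\mathcal{T}=(p_0,\ldots,p_K)\) as an element of the space of finite sequences of probability vectors on \(\mathcal{A}\). The goal is then to exhibit fixed maps \(\pi_{\mathrm{end}}\) and \(\mathcal{H}\) on this space that produce \(E(\mathcal{T})\) and the entropy trajectory.

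\textbf{Endpoint.} First, the last-coordinate projection \(\mathcal{T}\mapsto p_K\) is a function of \(\mathcal{T}\), because the length \(K+1\) of the sequence is itself read off from \(\mathcal{T}\). Second, \(\operatorname{argmax}_{a\in\mathcal{A}}p_K(a)\) is a function of \(p_K\) alone; \(\mathcal{A}\) is a fixed discrete set, so the maximum is attained whenever \(p_K\) has finite support, which always holds when \(\mathcal{A}\) is finite. The one subtlety is ties. Here I would fix a deterministic tie-breaking rule once and for all, for example a fixed total order on \(\mathcal{A}\), or else interpret \(E\) as set-valued, returning the set of maximizers. Either way \(E\) becomes a well-defined function of \(p_K\), and hence of \(\mathcal{T}\).

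\textbf{Entropy.} Here I would apply the fixed map \(H\) coordinatewise: \(\mathcal{H}(\mathcal{T})=(H\circ p_0,\ldots,H\circ p_K)\). Each \(H(p_k)\) depends only on \(p_k\), with the usual convention \(0\log 0=0\). If \(\mathcal{A}\) is countably infinite, \(H\) may take the value \(+\infty\), so I would take its codomain to be \([0,\infty]\) to keep it well defined.

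The main and only real obstacle is this well-definedness: argmax ties, and possibly infinite entropy when \(\mathcal{A}\) is infinite. Once those conventions are fixed, the result follows immediately. As a final remark, I would note that the converse fails: different trajectories can share both \(E\) and \(\mathcal{H}\), for instance by permuting answer labels at intermediate steps. This shows that the two summaries are coarser than \(\mathcal{T}\), and it prepares the strictness claim that presumably comes next in the paper.
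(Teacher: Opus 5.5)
Your proposal is correct and matches the paper's argument: $E$ reads only the terminal state $p_K$, and each entry of $\mathcal{H}$ reads only the corresponding $p_k$, so both are functions of $\mathcal{T}$. Your well-definedness remarks (a fixed tie-breaking rule or a set-valued $E$, and $H$ valued in $[0,\infty]$) are a harmless refinement the paper leaves implicit; the finite-support caveat is unnecessary, though, because a probability mass function on a countable set always attains its maximum.
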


The converse, however, does not hold.

\begin{proposition}[Endpoint and entropy trajectory are non-identifying]
\label{prop:nonidentifiability}
Neither endpoint prediction nor entropy trajectory identifies the underlying answer-distribution dynamics. In particular:

\begin{enumerate}
    \item The endpoint map \(E\) is non-injective: there exist distinct answer-distribution trajectories \(\mathcal{T}\neq\mathcal{T}'\) such that
    $
    E(\mathcal{T})=E(\mathcal{T}').
    $
    Hence, endpoint prediction does not determine the sequence of predictive states that produced it.
    \item For \(|\mathcal{A}|\geq 2\), the entropy map \(\mathcal{H}\) is non-injective: there exist distinct answer-distribution trajectories \(\mathcal{T} \neq \mathcal{T}'\) such that \(\mathcal{H}(\mathcal{T}) = \mathcal{H}(\mathcal{T}')\). Hence, an entropy trajectory does not identify the underlying sequence of predictive states.
\end{enumerate}
\end{proposition}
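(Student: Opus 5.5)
Both parts are existence statements, so the plan is to prove each by exhibiting an explicit pair of distinct trajectories, treating trajectories abstractly as finite sequences $(p_0,\dots,p_K)$ in the probability simplex $\Delta(\mathcal{A})$. Both $E$ and $\mathcal{H}$ are defined on such sequences in Proposition~\ref{prop:projection}. I will not need to realize these trajectories by an actual model. If one insists on realizability, I would add a remark that a suitably constructed autoregressive model, with its continuation probabilities chosen appropriately, can induce any prescribed finite sequence of prefix-conditioned distributions. For non-injectivity of the maps on the space of trajectories, though, the abstract construction suffices. To avoid ambiguity in the $\argmax$, I will keep final distributions with a unique maximizer.

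\textbf{Part 1 (endpoint).} Since $|\mathcal{A}|\ge 1$, fix $a\in\mathcal{A}$ and let $\delta_a$ be the point mass at $a$. Take $K=1$ and set $\mathcal{T}=(\delta_a,\delta_a)$. If $|\mathcal{A}|\ge2$, pick $b\neq a$ and set $\mathcal{T}'=(\delta_b,\delta_a)$. If $|\mathcal{A}|=1$, instead take trajectories of different lengths, $\mathcal{T}=(\delta_a)$ and $\mathcal{T}'=(\delta_a,\delta_a)$. In either case $\mathcal{T}\neq\mathcal{T}'$ while both endpoints equal $a$. In the two-answer case this pair also reproduces the ``switch of dominant answer'' phenomenon from Figure~\ref{fig:answer-distribution-trajectory}, which makes the interpretive point.

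\textbf{Part 2 (entropy).} For $|\mathcal{A}|\ge2$, pick distinct $a,b$. The key observation is that $H$ is invariant under permutations of labels, so swapping $a$ and $b$ preserves entropy. Take a non-uniform distribution $p$ supported on $\{a,b\}$, say $p(a)=3/4$ and $p(b)=1/4$, and let $p'$ be the swapped distribution with $p'(a)=1/4$ and $p'(b)=3/4$. Set $\mathcal{T}=(p,p)$ and $\mathcal{T}'=(p',p')$. Then $H(p)=H(p')$ at every index, so $\mathcal{H}(\mathcal{T})=\mathcal{H}(\mathcal{T}')$, while $\mathcal{T}\neq\mathcal{T}'$ because $p\neq p'$. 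To strengthen the point, I would note that $E$ also differs here ($a$ versus $b$), so even the pair $(E,\mathcal{H})$ cannot identify the trajectory when only one of them is fixed. For a sharper example where both $E$ and $\mathcal{H}$ agree, I would use $|\mathcal{A}|\ge2$ with $\mathcal{T}=(p,\delta_a)$ and $\mathcal{T}'=(p',\delta_a)$.

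\textbf{Main obstacle.} There is little technical difficulty; the main care point is well-definedness. This means avoiding ties in the $\argmax$ by using non-uniform final distributions, and handling the degenerate case $|\mathcal{A}|=1$ in Part 1. That case is exactly why the entropy claim assumes $|\mathcal{A}|\ge2$: with one answer the simplex is a single point, so every trajectory of a given length is the same. I would state this explicitly.
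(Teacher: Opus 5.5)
Your proposal is correct and follows essentially the same route as the paper. The paper also witnesses both claims with explicit two-answer trajectories and uses the invariance of $H$ under swapping labels: for Part 1 it compares a constant trajectory with a revising one that ends at $(0.9,0.1)$, and for Part 2 it compares the constant trajectories $(0.9,0.1)$ and $(0.1,0.9)$. Your additions are valid refinements the paper omits: handling $|\mathcal{A}|=1$ via trajectories of different lengths, and the pair $(p,\delta_a)$ versus $(p',\delta_a)$ on which $E$ and $\mathcal{H}$ agree simultaneously.
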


\begin{corollary}[Answer-distribution trajectories are a strictly finer representation]
\label{cor:finer}
The answer-distribution trajectory is a strictly finer representation of the reasoning process than either endpoint prediction or entropy trajectory. Both are deterministic functions of the trajectory by Proposition~\ref{prop:projection}, while Proposition~\ref{prop:nonidentifiability} shows that neither is sufficient, in general, to recover the underlying trajectory.
\end{corollary}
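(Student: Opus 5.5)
The plan is to prove both parts by explicit construction, treating $E$ and $\mathcal{H}$ as maps on sequences of probability vectors over $\mathcal{A}$. Nothing beyond Definitions~\ref{def:prefixpredictive} and the definitions of $E$ and $\mathcal{H}$ is needed. I will also ignore whether a given sequence is realizable by some model $p_\theta$, since the statement is about the maps themselves. If realizability is wanted, I will note that any finite sequence of distributions can be realized by a suitably constructed autoregressive model whose prefix-conditioned answer probabilities are set directly.

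\textbf{Part 1.} Suppose $|\mathcal{A}|\ge 2$, with answers $a\neq b$, and fix any length $K\ge 1$. Let $\mathcal{T}$ have every $p_k=\delta_a$. Let $\mathcal{T}'$ have $p'_0=\delta_b$ and $p'_k=\delta_a$ for all $k\ge1$. Then $p_K=p'_K=\delta_a$, so $E(\mathcal{T})=E(\mathcal{T}')=a$, while $p_0\neq p'_0$. This is exactly a ``switch in the dominant answer'' that the endpoint cannot see.

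If $|\mathcal{A}|=1$, there is only one distribution, so no two trajectories differ. In that case I will state the claim under the same $|\mathcal{A}|\ge2$ hypothesis as part 2, or note that it is implicit. The argmax is a singleton in the construction, so ties are not an issue.

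\textbf{Part 2.} The key observation is that $H$ is invariant under permuting the labels of $\mathcal{A}$. Take $a\neq b$ and let $\sigma$ be the transposition swapping them. Choose any trajectory $\mathcal{T}$ containing at least one $p_k$ with $p_k(a)\neq p_k(b)$, for instance $p_k=\delta_a$. Define $\mathcal{T}'$ by $p'_k=p_k\circ\sigma$ for every $k$. Then $H(p'_k)=H(p_k)$ for all $k$, so $\mathcal{H}(\mathcal{T})=\mathcal{H}(\mathcal{T}')$, yet $p'_k\neq p_k$ at the chosen index.

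A more vivid variant is to swap only from some step onward. This yields two trajectories that agree early and then commit to different answers, with identical entropy profiles. It shows that even the endpoint can differ under equal entropy, which is worth remarking on.

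\textbf{Corollary~\ref{cor:finer}.} This follows by combining the two results. By Proposition~\ref{prop:projection}, the trajectory determines both $E$ and $\mathcal{H}$. By Proposition~\ref{prop:nonidentifiability}, neither map is injective, so no function can recover $\mathcal{T}$ from either summary.

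The only real obstacle is bookkeeping rather than mathematics. It consists of fixing the degenerate cases: $|\mathcal{A}|=1$, ties in the argmax, and, if required, showing the constructed sequences arise from a genuine model.
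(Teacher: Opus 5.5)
Your overall route is the paper's. Proposition~\ref{prop:projection} makes $E$ and $\mathcal{H}$ factor through $\mathcal{T}$, non-injectivity is witnessed explicitly on a two-answer sub-alphabet, and Part 2 rests on the permutation invariance of $H$; the paper's constant trajectories at $(0.9,0.1)$ and $(0.1,0.9)$ are exactly your $\sigma$-construction. As a statement about the maps $E$ and $\mathcal{H}$ on arbitrary finite sequences of probability vectors, which is the level at which the paper argues, your proof is correct. What is wrong is your fallback for realizability. It is not true that any finite sequence of distributions arises from a model whose prefix-conditioned answer probabilities are ``set directly'', and your own Part 1 witness is a counterexample.

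Prefix-conditioned probabilities are not free parameters. By the marginalization in Definition~\ref{def:prefixpredictive},
\[
p_k(a)=\sum_{c} p_\theta(C_{k+1}=c\mid q,c_{1:k})\,p_\theta(Y=a\mid q,c_{1:k},c),
\]
where the end-of-reasoning event counts as one of the branches $c$. So $p_k$ is a convex combination of the possible next states, and the realized $p_{k+1}$ enters with positive weight because the realized token was sampled; in other words, the trajectory is a martingale. Consequently $\mathrm{supp}(p_{k+1})\subseteq\mathrm{supp}(p_k)$ along every realized trace. In particular $p_0=\delta_b$ forces $p_1=\delta_b$, so $\delta_b\to\delta_a\to\cdots\to\delta_a$ is not the answer-distribution trajectory of any pair $(q,c)$. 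Under the literal reading of Proposition~\ref{prop:nonidentifiability}, where trajectories come from a query and a realized trace, your Part 1 therefore has no admissible witness.

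The repair is cheap, and it is what the paper's interior states provide. Use full-support distributions, for example the paper's $(0.1,0.9)\to(0.4,0.6)\to(0.9,0.1)$ against the constant $(0.9,0.1)$. Even simpler, take $(0.4,0.6)\to\delta_a$ against $\delta_a\to\delta_a$, realized by a first token of probability $0.4$ leading to $\delta_a$, with the other branch leading to $\delta_b$. Support monotonicity is the only obstruction for an expressive enough model. If $\mathrm{supp}(p_{k+1})\subseteq\mathrm{supp}(p_k)$, give the realized token a small probability $w$ and send the remaining mass to a branch with answer distribution $(p_k-w\,p_{k+1})/(1-w)$. Your Part 2 is unaffected, since relabeling preserves support monotonicity. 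Your ``vivid'' variant is also fine, provided the shared state just before the swap puts mass on both $a$ and $b$.
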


The distinction is useful conceptually. Endpoint evaluation preserves the \emph{destination} of reasoning while discarding its path. Entropy preserves the temporal evolution of \emph{concentration} while discarding the identity of the hypotheses between which probability mass moves. Answer-distribution trajectories retain both: they record which hypotheses remain plausible at each point in the trace and how probability mass is transported between them.

\section{Trajectory-level diagnostics}
\label{sec:diagnostics}

Answer-distribution trajectories allow us to define trajectory-level metrics that summarize different aspects of how predictive answer distributions evolve during reasoning. Taken together, these metrics form a \emph{dynamical profile} of a reasoning trajectory, providing a common representation for comparing reasoning behavior across models and tasks. By additionally orienting the trajectory relative to the correct answer, we can distinguish different dynamical mechanisms of reasoning success and failure.

\subsection{Dynamical reasoning profiles}
\label{sec:target_agnostic}

We summarize each trajectory along four complementary dimensions: \textbf{exploration}, \textbf{revision}, \textbf{motion}, and \textbf{commitment}. These values constitute the trajectory's \emph{dynamical profile}, providing a common coordinate system for characterizing reasoning dynamics.

Different dynamical patterns may arise across reasoning trajectories, and which profiles are preferable will depend on what we are optimizing for. Dynamical profiles make this heterogeneity explicit and measurable, allowing us to compare reasoning strategies and study which profiles are better suited to different objectives. A first property of a reasoning trajectory is the breadth of its predictive state.

\paragraph{Exploration: How many hypotheses remain in contention?} We measure this by the effective support of the predictive distribution,
$
S_k^{\mathrm{eff}} = \exp\left( -\sum_a p_k(a)\log p_k(a) \right) = \exp(H(p_k)),
$
which is the base-$e$ perplexity of the predictive answer distribution and can be interpreted as the effective number of plausible answers after $k$ CoT tokens. $S_k^{\mathrm{eff}}\approx 1$ indicates concentration on a single hypothesis, while larger values indicate broader competition. To summarize exploration over the full trajectory, we use the mean effective support across the CoT tokens,
\(
S_{\mathrm{eff}}
=
\frac{1}{m}
\sum_{j=1}^{m}
S^{\mathrm{eff}}_{k_j}.
\)

Breadth, however, does not tell us whether reasoning actually changes which hypothesis the model prefers. This motivates a second question.

\paragraph{Revision: Do the leading hypotheses change, and are earlier hypotheses revisited?} 
Let \(D_k=\arg\max_{a\in\mathcal A}p_k(a)\) denote the set of dominant answers after the first \(k\) CoT tokens. We first measure how often dominance changes,
\(
N_{\mathrm{switch}}
=
\sum_{k=0}^{K-1}
\mathbf{1}[D_{k+1} \neq D_k].
\)
To capture whether these switches return to previously considered hypotheses, we also measure recurrence,
\(
N_{\mathrm{return}}
=
\sum_{a\in\mathcal{A}}
\#\left\{
k \in \{1,\ldots,K\} :
a\in D_k,\;
a\notin D_{k-1},\;
\exists\, j<k \text{ such that } a\in D_j
\right\}.
\)

Revision records which answers are on top, but discards how probability mass moves underneath. Thus, we next consider the evolution of the full predictive distribution.

\paragraph{Motion: How much does reasoning change the predictive state?}
We measure consecutive distributional change using total variation, 
$
J_k = \mathrm{TV}(p_k,p_{k+1}) = \frac{1}{2} \sum_a \left| p_{k+1}(a)-p_k(a) \right|.
$
The total path length,
$
V = \sum_k J_k,
$
measures cumulative distributional movement. Its temporal concentration,
$
\mathrm{PR} = { \left(\sum_k J_k\right)^2 }/{ (K\sum_k J_k^2)},
$
distinguishes trajectories that evolve gradually from those whose movement is concentrated in a small number of steps.  To quantify movement directness, we compare start-to-end displacement with total path length, \(R_{\mathrm{direct}} = \frac{TV(p_0, p_K)}{V}.\) For \(V>0\), \(R_{\mathrm{direct}}\in[0,1]\), with higher values indicating more direct trajectories and lower values indicating greater backtracking.

Finally, motion does not tell us when the predictive state settles. Our final dimension captures this.

\paragraph{Commitment: When does competition resolve?} We measure when a hypothesis becomes both confident and stable. For a threshold $\tau \in (0.5,1]$, we define
\(
k_{\mathrm{commit}}(\tau)
=
\min\left\{
k :
\exists\, a\in\mathcal{A}
\text{ such that }
p_j(a)\geq\tau
\;\; \forall j\geq k
\right\}\), $t_{\mathrm{commit}} = \frac{k_{\mathrm{commit}}}{K}.
$
Small $t_{\mathrm{commit}}$ indicates early lock-in, whereas large values indicate persistent competition. Commitment is undefined if no hypothesis remains above the threshold through the end of the trajectory. 

These four dimensions describe how broadly probability is distributed across alternatives, how often the dominant answer changes, how much the predictive distribution shifts, and when it stabilizes. By turning these qualitative reasoning behaviors into measurable properties, dynamical profiles provide candidate targets for shaping reasoning behavior through training or inference-time interventions.

\subsection{Mechanisms of success and failure} 
\label{sec:success-failure-mechanisms} 

We now combine the evolution of the gold answer with the trajectory's outcome to define various dynamical mechanisms of success and failure. Let \(a^\star\) denote the gold answer and \(D_k\) the set of dominant answers at reasoning position \(k\). We say that the gold answer is \emph{dominant} at \(k\) whenever \(a^\star \in D_k\), including when it is tied with another answer. We quantify the prevalence of these mechanisms across models and tasks in Section~\ref{sec:information}.

\paragraph{Success mechanisms.} Conditioning on realized endpoint correctness, correct traces separate into four dynamical mechanisms:
\begin{itemize}
    \item \textbf{Stable success.} The gold answer remains dominant throughout the observed trajectory.
    \item \textbf{Rescue.} The gold answer is not initially dominant but is dominant at the end of the trajectory.
    \item \textbf{Detour success.} The gold answer is dominant initially and at the end of the trajectory, but temporarily loses dominance.
    \item \textbf{Fragile success.} The realized answer is correct even though the gold answer is not dominant at the end of the trajectory.
\end{itemize}

\paragraph{Failure mechanisms.}
Incorrect traces separate into four complementary failure mechanisms:
\begin{itemize}
    \item \textbf{Never discovered.} The gold answer is never dominant throughout the observed trajectory.
    \item \textbf{Escape.} The gold answer is initially dominant but is not dominant at the end of the trajectory.
    \item \textbf{Failed rescue.} The gold answer is not initially dominant, becomes dominant at some point during the trajectory, but is not dominant at the end.
    \item \textbf{Stochastic miss.} The realized answer is incorrect even though the gold answer is dominant at the end of the trajectory.
\end{itemize}

Several of these mechanisms are informative about \textit{overthinking} \cite{pmlr-v267-chen25bx, zhou-etal-2026-thinking, caldarella2026thinkingpastanswerevaluating}: escape and failed rescue capture cases in which continued reasoning moves the model away from a state where the correct answer was dominant, while detour success captures cases in which this degradation is temporary.

An important observation is that the same dynamical pattern can produce different outcomes depending on which hypotheses the trajectory explores and ultimately favors. Nevertheless, some dynamical strategies may increase the likelihood of success for a given model, task, or objective. Characterizing both profiles and mechanisms therefore provides a vocabulary for identifying advantageous reasoning behaviors and, ultimately, for encouraging them through training or inference-time interventions.

\section{Results}

In this section, we conduct a broad set of experiments to determine whether answer-distribution trajectories capture information missed by conventional summaries of reasoning, and what additional insights they provide. We organize our empirical validation section around three questions: (i) can traces with the same endpoint or similar entropy profiles exhibit different answer-distribution dynamics, (ii) how do reasoning dynamics vary across traces, models, and tasks, and (iii) how do instruction tuning, scale, and decoding temperature reshape these dynamics?

We evaluate sixteen models across four datasets (GSM8K, ARC, SVAMP, and MATH) spanning base, instruction-tuned, and RL-trained regimes. We estimate all entropy quantities via Monte Carlo rollouts under stochastic decoding. Full evaluation details are provided in Appendix~\ref{sec:experimental_setup} and full reproduction details for figures and tables are provided in Appendix~\ref{app:visualization_reproduction}.

\subsection{Coarse summaries collapse distinct reasoning dynamics}
\label{sec:coarse-summaries-results}

Our theoretical analysis in Section~\ref{sec:information} showed that answer-distribution trajectories determine both endpoint predictions and entropy profiles, whereas the converse does not hold. We now test whether this non-identifiability arises in practice. 

\paragraph{Same endpoint, different dynamics.}
Figure~\ref{fig:mechanisms-by-model} compares the prevalence of the success and failure mechanisms defined in Section~\ref{sec:success-failure-mechanisms} across models, with models ordered by endpoint accuracy. Models with similar accuracy can nevertheless exhibit markedly different mixtures of success and failure mechanisms, as seen among neighboring rows in the accuracy-ordered panels. Endpoint performance therefore does not identify the dynamical processes through which successes and failures arise.

\begin{figure}[t]
    \centering
    \begin{subfigure}[t]{0.49\linewidth}
        \centering
        \includegraphics[width=\linewidth]{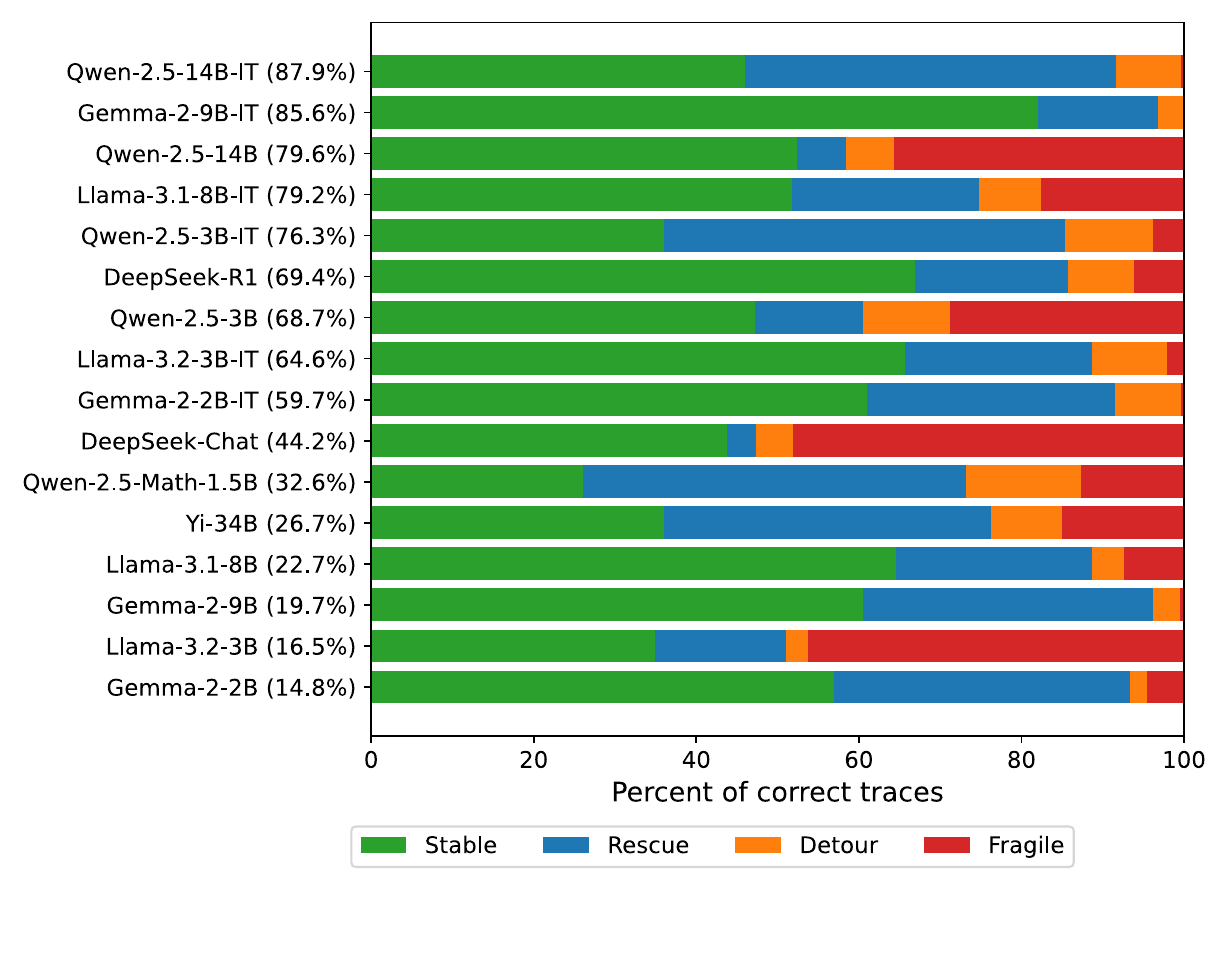}
        \caption{Success mechanisms among correct traces.}
        \label{fig:success-mechanisms-by-model}
    \end{subfigure}
    \hfill
    \begin{subfigure}[t]{0.49\linewidth}
        \centering
        \includegraphics[width=\linewidth]{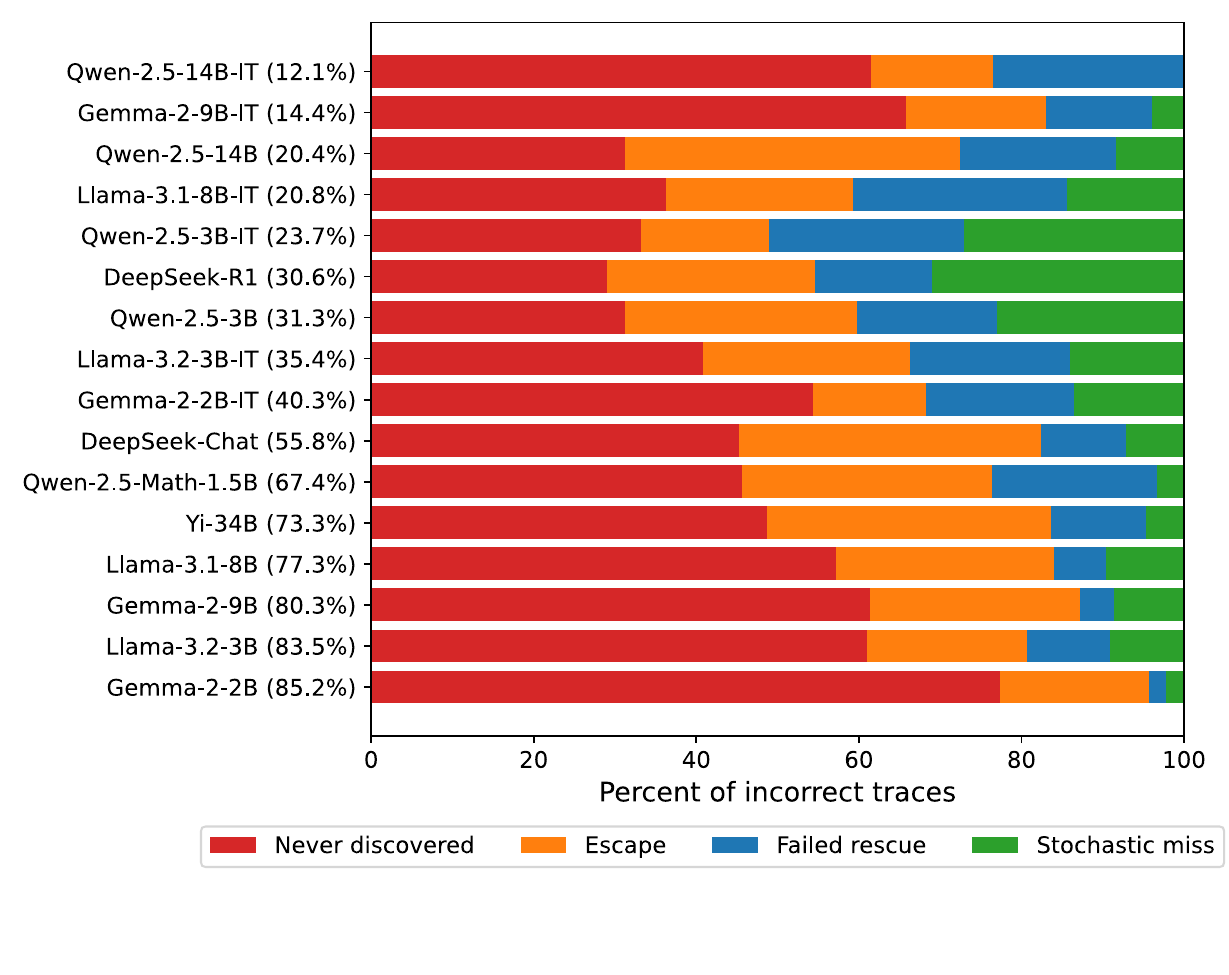}
        \caption{Failure mechanisms among incorrect traces.}
        \label{fig:failure-mechanisms-by-model}
    \end{subfigure}
    \caption{
    Success and failure mechanisms by model, with rows sorted by endpoint accuracy. Models with similar endpoint accuracy can exhibit substantially different mixtures of these mechanisms.
    }
    \label{fig:mechanisms-by-model}
\end{figure}

\begin{figure}[t]
    \centering
    \includegraphics[width=0.7\linewidth]{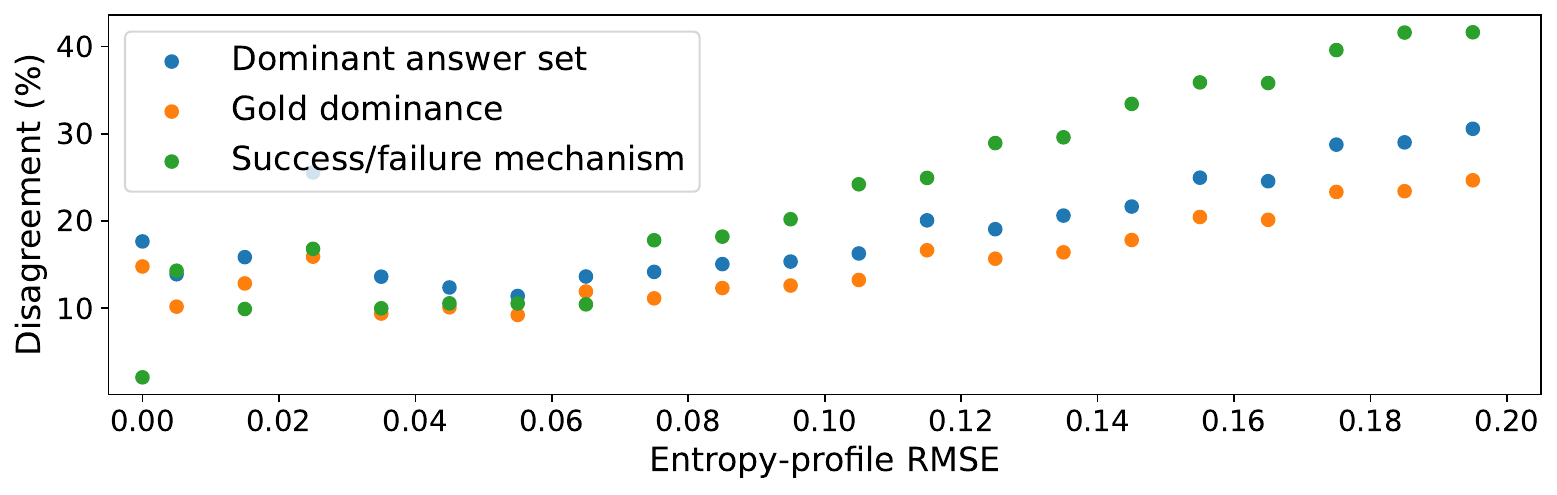}
    \caption{
    We compare cross-model traces for the same question using the RMSE between their entropy profiles over normalized reasoning time. The displayed range up to RMSE = 0.20 corresponds to 12\% of the largest same-question cross-model RMSE observed in our data. Similar entropy evolution does not imply similar answer-distribution dynamics.
    }
    \label{fig:entropy-ambiguity-curve}
\end{figure}

\paragraph{Same entropy profile, different dynamics.}
Figure~\ref{fig:entropy-ambiguity-curve} matches trajectories for the same question according to the similarity of their entropy profiles and measures disagreement in properties of the corresponding answer-distribution trajectories. Even for traces with similar entropy profiles, trajectories can disagree about which answers are dominant, whether the gold answer is dominant, and which success or failure mechanism the trajectory instantiates. Entropy therefore does not identify the underlying answer-distribution dynamics.

These results confirm that answer-distribution trajectories retain information lost by conventional summaries, motivating their use as a richer diagnostic of reasoning dynamics. In the following sections, we use them to characterize variation across traces, models, and tasks, and to study how training and inference choices reshape that variation.

\subsection{Reasoning dynamics vary within and across models and tasks} \label{sec:dynamical-landscape} 

The dynamical profiles introduced in Section~\ref{sec:target_agnostic} provide a coordinate system for comparing reasoning trajectories. We now ask whether traces exhibit distinct profiles within and across models and tasks, and whether this variation indicates which reasoning dynamics are better suited to different objectives.

\paragraph{Individual traces exhibit substantial variation.} 
Reasoning trajectories can differ substantially in their dynamical profiles. To identify the sources of this variation, in Table~\ref{tab:variation_components} we decompose each trajectory-level metric into four components: variation across model-level means, variation across task-level means, model-task interaction, and variation across individual traces within a fixed model-task condition. Every reported variance component is nonzero, indicating heterogeneity in each metric. Notably, the within-condition component is the largest for every metric, which indicates that a model-task pair does not correspond to a single characteristic reasoning profile. This suggests that the metrics capture meaningful variation at the level of individual reasoning traces and are not limited to distinguishing between models or tasks.

\begin{table}[t]
\centering
\caption{Magnitude and sources of variation in trajectory-level reasoning metrics. Entries report root variance components in the natural units of each metric. The total variation is obtained by summing the squared components and taking the square root. Model-task pairs are weighted equally.}
\label{tab:variation_components}
\small
\resizebox{\columnwidth}{!}{ 
\begin{tabular}{lrrrrrrr}
\toprule
Source of variation & $S_{\mathrm{eff}}$ & $N_{\mathrm{switch}}/\mathrm{step}$ & $N_{\mathrm{return}}/\mathrm{step}$ & $V/\mathrm{step}$ & $\mathrm{PR}$ & $R_{\mathrm{direct}}$ & $t_{\mathrm{commit}}(0.8)$ \\
\midrule
Model & 0.529 & 0.160 & 0.021 & 0.164 & 0.211 & 0.159 & 0.160 \\
Task & 0.323 & 0.079 & 0.017 & 0.068 & 0.070 & 0.087 & 0.065 \\
Model x task & 0.466 & 0.129 & 0.034 & 0.103 & 0.092 & 0.104 & 0.167 \\
Within model-task & 1.09 & 0.270 & 0.123 & 0.204 & 0.254 & 0.299 & 0.322 \\
\midrule
Total & 1.34 & 0.348 & 0.131 & 0.290 & 0.350 & 0.365 & 0.402 \\
\bottomrule
\end{tabular}
}
\end{table}

\paragraph{Which dynamical profile is favorable depends on the objective being optimized.} As shown in Table~\ref{tab:objective_profiles}, model-task pairs with the highest endpoint accuracy tend to exhibit more focused reasoning dynamics: they maintain narrower predictive support, switch dominant hypotheses less frequently, undergo less local distributional movement, and commit earlier than the task average. In contrast, when the goal is efficiency, the shortest-CoT traces exhibit broader predictive support, more frequent hypothesis switching, greater and more temporally distributed local distributional movement, and later commitment. Thus, no single reasoning profile is universally favorable.

These examples illustrate why dynamical profiles can be useful beyond describing heterogeneity. By turning qualitative reasoning behaviors into measurable properties, they provide coordinates to evaluate which dynamical profiles are advantageous under a particular model, task, and objective.

\begin{table}[t]
\centering
\caption{Different objectives favor different dynamical profiles. Standardized mean profiles for high-accuracy and short-CoT model-task pairs. Values denote deviations from the task mean in standard deviation units.}
\label{tab:objective_profiles}
\small
\resizebox{\columnwidth}{!}{ 
\begin{tabular}{lrrrrrrrrr}
\toprule
Objective & $S_{\mathrm{eff}}$ & $N_{\mathrm{switch}}/\mathrm{step}$ & $N_{\mathrm{return}}/\mathrm{step}$ & $V/\mathrm{step}$ & $\mathrm{PR}$ & $R_{\mathrm{direct}}$ & $t_{\mathrm{commit}}(0.8)$ & Acc. & CoT length \\
\midrule
High accuracy & -0.83 & -0.68 & -0.50 & -0.85 & -0.92 & -0.14 & -0.53 & 78.8\% & 240 \\
Short CoT & +1.09 & +1.03 & -0.45 & +1.23 & +1.27 & +1.30 & +0.76 & 18.6\% & 56 \\
\bottomrule
\end{tabular}
}
\end{table}

\subsection{Training and inference choices reshape reasoning dynamics}
\label{sec:reshaping-dynamics}

The previous section shows that traces can present vastly different reasoning profiles and that the desirability of each of them depends on what we seek to optimize. We next ask whether choices made during training or inference time can steer models towards presenting specific dynamical profiles. 

Figure~\ref{fig:training_inference_dynamics} reports the change in each trajectory-level metric for every matched comparison, together with the aggregate effect. Instruction tuning produces the clearest dynamical shift, yielding narrower predictive support (\(S_{\mathrm{eff}}\)), fewer hypothesis switches (\(N_{\mathrm{switch}}/\mathrm{step}\)), less distributional movement (\(V/\mathrm{step}\)), earlier commitment (\(t_{\mathrm{commit}}(0.8)\)), and higher endpoint accuracy. Increasing model scale shows a weaker, less uniform tendency toward shifts in the same direction, with improved accuracy. In contrast, increasing decoding temperature broadens predictive support (\(S_{\mathrm{eff}}\)), increases distributional movement (\(V/\mathrm{step}\)), and delays commitment (\(t_{\mathrm{commit}}(0.8)\)), while its effect on hypothesis switching (\(N_{\mathrm{switch}}/\mathrm{step}\)) is smaller and accuracy changes are model-dependent. These results suggest that training and inference-time interventions induce systematic average shifts in the dynamical profiles of the generated reasoning traces.

\begin{figure*}[t]
    \centering
    \includegraphics[
        width=\textwidth,
        trim=0 0 0 0,
        clip
    ]{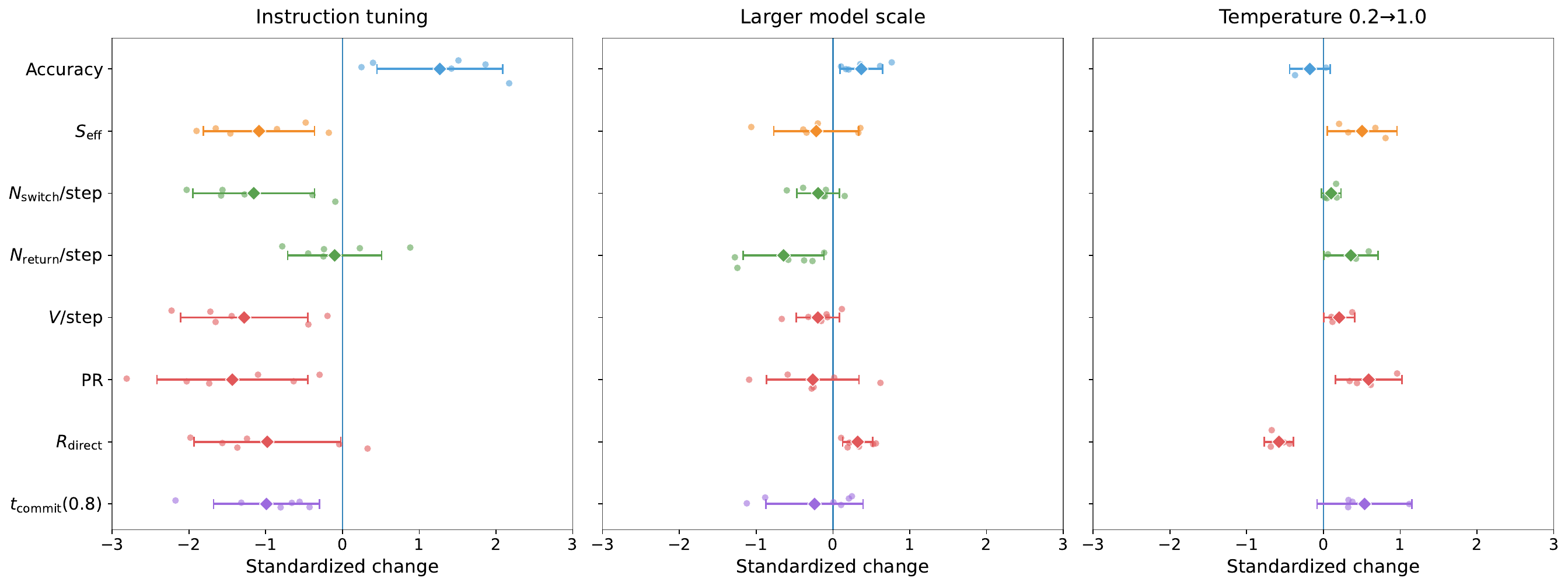}
    \caption{
        \textbf{Training and inference choices reshape reasoning dynamics.}
        Standardized changes in endpoint accuracy and trajectory-level metrics
        under matched comparisons of instruction tuning, increasing model scale,
        and increasing decoding temperature.
        Each point represents a matched model-task comparison; diamonds show mean effects and error bars denote 95\% confidence intervals. Changes are standardized by the across-condition standard deviation of the corresponding metric.
    }
    \label{fig:training_inference_dynamics}
\end{figure*}

\section{Conclusion and Open Questions}

This work introduces answer-distribution trajectories as a framework for studying how a language model’s predictive distribution over final answers evolves throughout CoT reasoning. Unlike endpoint predictions or entropy profiles, these trajectories preserve both which hypotheses are supported and how probability mass moves among them. We derive trajectory-level measures of exploration, revision, motion, and commitment and combine them into dynamical profiles. We use these profiles to characterize reasoning dynamics across sixteen open-weight language models and four reasoning benchmarks, finding variation in how final answers are reached both within and across model-task pairs. We also show that different objectives favor different dynamical profiles, and that training and inference choices systematically reshape them. These results position the evolution of the full predictive distribution as an informative characterization of LLM reasoning, and stochastic dynamics as a natural mathematical language for describing the paths by which LLMs arrive at their answers.

Some open questions remain. Our analysis is computationally expensive, developing cheaper approximations of answer-distribution trajectories is therefore an important direction for future work. Moreover, the comparisons in Section~\ref{sec:reshaping-dynamics} are only descriptive, and future controlled studies could better isolate the effects of architecture, scale, and post-training. Lastly, whether the information that answer-distribution trajectories provide can translate into practical gains remains an open question. Promising directions include improving error prediction and adaptive test-time compute, diagnosing and mitigating overthinking, and informing reasoning-model optimization and post-training.

\section*{Acknowledgements}

Mar Gonzàlez I Català acknowledges that this project was supported by G-Research and by the Qualcomm Innovation Fellowship Europe. Davide Murari acknowledges support from the EPSRC grant EP/Y028783/1. George D. Monta\~nez acknowledges support from the William Whewell Centre for Science and Natural Theology and the Global Scholars Foundation.

%% file: AppendixA_experimental_setup.tex
\section{Experimental Setup}
\label{sec:experimental_setup}

\subsection{Tasks and datasets}
\label{sec:datasets}

We focus on reasoning tasks with a discrete answer space \(\mathcal{A}\). Each example consists of a question \(Q \in \mathcal{Q}\) and a ground-truth answer \(A \in \mathcal{A}\). We evaluate on the following datasets:

\begin{itemize}
\item \textbf{GSM8K} \cite{cobbe2021trainingverifierssolvemath}: grade-school mathematical word problems with numeric answers.
\item \textbf{ARC} \cite{clark2018thinksolvedquestionanswering}: multiple-choice science questions.
\item \textbf{SVAMP} \cite{patel2021nlpmodelsreallyable}: arithmetic word problems designed to test robustness to linguistic variation.
\item \textbf{MATH} \cite{hendrycks2021measuringmathematicalproblemsolving}: competition-level mathematics (we use algebra track).
\end{itemize}

For all datasets, we use the official test splits and apply deterministic answer normalization and parsing to map model outputs to discrete answer labels (e.g., numeric normalization for GSM8K, SVAMP and MATH, letter-to-option mapping for ARC). Invalid or unparsable outputs are mapped to a special null answer category.

\subsection{Models}
\label{sec:models}

We evaluate a diverse set of sixteen open-weight LLMs corresponding to different training regimes:

\begin{itemize}
\item \textbf{Gemma-2-2B and Gemma-2-9B} \cite{gemmateam2024gemma2improvingopen}: base and instruction-tuned variants for each.
\item \textbf{LLaMA-3.2-3B and LLaMA-3.1-8B} \cite{grattafiori2024llama3herdmodels}: base and instruction-tuned variants for each.
\item \textbf{Qwen-2.5-3B and Qwen-2.5-14B} \cite{qwen2025qwen25technicalreport}: base and instruction-tuned variants for each.
\item \textbf{Qwen-2.5-Math-1.5B} \cite{qwen2025qwen25technicalreport}: SFT-trained specialized on math problems.
\item \textbf{DeepSeek-Chat-7B} \cite{deepseekai2024deepseekllmscalingopensource}: SFT-trained chat model.
\item \textbf{DeepSeek-R1-distilled-7B} \cite{Guo_2025}: reasoning-specialized RL model.
\item \textbf{Yi-1.5-34B} \cite{ai2025yiopenfoundationmodels}: base variant.
\end{itemize}

Base models correspond to pretrained LLMs without supervised or reinforcement fine-tuning. Instruction-tuned (IT) models are supervised fine-tuned on instruction-following data. RL-trained models are optimized using reinforcement learning from human or synthetic feedback.

\subsection{Generation procedure}
\label{subsec:generation_procedure}

For each question \(Q=q\), we sample \(M\) independent reasoning trajectories from the model under a fixed stochastic decoding configuration (temperature, nucleus sampling, and maximum generation length). Concretely, for each \(i \in \{1,\dots,M\}\) we draw
\[
C^{(i)}_{1:K^{(i)}} \sim p_\theta(\cdot \mid q),
\]
where \(K^{(i)}\) denotes the generated reasoning length (up to a fixed truncation limit). We treat each sampled trajectory \(C^{(i)}_{1:K^{(i)}}\) as one realization of the model's reasoning process for the given query.

Unless otherwise specified, decoding uses:
\begin{itemize}
    \item temperature $T=0.7$
    \item nucleus sampling with $p=0.9$
    \item a maximum generation length of 600 tokens
\end{itemize}

Each trajectory is treated as one realization of the model's reasoning process for the given query. All continuation rollouts used to estimate prefix-conditioned answer distributions use the same decoding configuration to ensure comparability.

\subsection{Monte-Carlo estimation of prefix-conditioned answer distributions}
\label{subsec:mc_answer_distribution}

Given a fixed query \(Q=q\) and a realized reasoning prefix \(C_{1:k}=c_{1:k}\), the model induces a predictive distribution over discrete final-answer labels,
\[
p_k(a) 
=
p_\theta(Y=a \mid q,c_{1:k}),
\qquad a\in\mathcal{A},
\]
where \(Y\) denotes the discrete answer label obtained by applying the deterministic answer parser to a generated continuation.

Because this distribution marginalizes over all possible future reasoning continuations, we approximate it using Monte-Carlo sampling. For each fixed prefix \((q,c_{1:k})\), we draw \(N\) independent stochastic continuations from the model,
\[
C^{(i)}_{>k}, A^{(i)}
\sim
p_\theta(\cdot \mid q,c_{1:k}),
\qquad i=1,\dots,N,
\]
using the same decoding configuration as the original reasoning trajectory. Each generated answer sequence \(A^{(i)}\) is mapped by the deterministic parser to a discrete answer label \(Y^{(i)}\in\mathcal{A}\), including the special null category for invalid or unparsable outputs.

The samples define the empirical prefix-conditioned answer distribution
\[
\widehat{p}_k(a)
=
\frac{1}{N}
\sum_{i=1}^{N}
\mathbf{1}\!\left\{Y^{(i)}=a\right\},
\qquad a\in\mathcal{A}.
\]

We use \(\widehat{p}_k\) as the empirical approximation to the prefix-conditioned predictive state \(p_k\) throughout our trajectory analysis. All trajectory-level quantities are computed from these empirical predictive distributions.

All continuation rollouts are performed in evaluation mode without gradient computation, and sampling parameters are held fixed across models and prefixes. In practice, we use \(N=16\) independent continuations per prefix unless otherwise stated.

\subsection{Checkpointed prefix evaluation}
\label{subsec:checkpointed_prefix_evaluation}

Estimating conditional answer entropy at every token is computationally expensive. We therefore evaluate at checkpoint positions
\[
\mathcal{J} = \{j_1, j_2, \dots, j_m\} \subseteq \{0,1,\dots,K\},
\]
spaced uniformly at stride \(s=16\), and always including the final prefix length of the trajectory (\(j_m = K\)). The checkpoint at position \(0\) corresponds to the empty prefix.

%% file: AppendixB_reproduction_figures_and_tables.tex
\section{Reproduction details for figures and tables}
\label{app:visualization_reproduction}

This section documents the exact construction of the data-driven figures and tables in the main text.

\paragraph{Common trace processing.}
The analyses use the complete $4\times16$ design consisting of ARC, GSM8K, MATH, and SVAMP crossed with the sixteen paper models. Answer parsing, normalization, and correctness evaluation follow the procedure described in Section~\ref{sec:datasets}.

Trajectory-level metrics are defined in Section~\ref{sec:target_agnostic}. In all analyses below, these metrics are computed from the observed answer-distribution states at checkpoint positions with a nonempty predictive distribution. $R_{\mathrm{direct}}$ is undefined for trajectories with zero total path length and is omitted from summaries of that metric. Commitment uses the threshold $\tau=0.8$ and is undefined when no hypothesis remains above this threshold through the end of the observed trajectory; such traces are omitted from summaries of $t_{\mathrm{commit}}$.

\paragraph{Figure~\ref{fig:mechanisms-by-model}: success and failure mechanisms.}
Each trace is first split by realized endpoint correctness. Correct and incorrect traces are assigned to the success and failure mechanisms defined in Section~\ref{sec:success-failure-mechanisms} based on the trajectory of gold-answer dominance. At each checkpoint, the gold answer is considered dominant whenever it belongs to the set of probability-maximizing answers, including in the case of an exact tie. For each model, the left panel reports the fraction of correct traces in each success mechanism, and the right panel reports the corresponding fractions among incorrect traces. The rows are ordered by the model's pooled endpoint accuracy across the four default tasks. The percentage displayed next to each model name is pooled accuracy in the success panel and pooled failure rate in the failure panel.

\paragraph{Figure~\ref{fig:entropy-ambiguity-curve}: near-matched entropy profiles.}
We compare reasoning traces from different models only when they correspond to the same dataset example. Because traces can have different lengths and checkpoints at different positions, we rescale reasoning progress from 0 (start) to 1 (end) and interpolate each entropy profile onto the same 101 evenly spaced points. We then measure how similar two entropy profiles are using root mean squared error (RMSE), where smaller values indicate more similar profiles. At each normalized position, the dominant set and gold-dominance indicator are taken from the most recent observed checkpoint.

For every cross-model pair, we record three disagreement measures: the fraction of normalized-time grid points with different dominant sets, the fraction with different gold-dominance indicators, and an indicator for whether the two traces instantiate different success or failure mechanisms. Mechanism disagreement is evaluated only when the two traces have the same realized outcome and both receive a mechanism label. We group non-identical trace pairs into non-overlapping RMSE bins of width $0.01$, from $(0,0.01]$ through $(0.19,0.20]$, so that each pair contributes to exactly one bin. Exact entropy-profile matches, defined as RMSE $\leq 10^{-12}$, are reported separately at zero. Each plotted point shows the mean disagreement among the pairs in that RMSE bin, expressed as a percentage, and is positioned at the midpoint of the corresponding bin.

\paragraph{Table~\ref{tab:objective_profiles}: variance decomposition.}
Values are computed separately for each of the seven trajectory-level profile metrics. Let $\mu_{mt}$ denote the mean of a metric in model $m$ and task $t$, let $\mu_{m\cdot}$ and $\mu_{\cdot t}$ be the corresponding marginal means, and let $\mu$ be the grand mean across the 64 model-task cells. The between-condition components are
\[
\sigma^2_{\mathrm{model}}
= \frac{1}{M}\sum_m(\mu_{m\cdot}-\mu)^2,
\qquad
\sigma^2_{\mathrm{task}}
= \frac{1}{T}\sum_t(\mu_{\cdot t}-\mu)^2,
\]
\[
\sigma^2_{\mathrm{interaction}}
= \frac{1}{MT}\sum_{m,t}
(\mu_{mt}-\mu_{m\cdot}-\mu_{\cdot t}+\mu)^2.
\]
The within-condition component is the equally weighted average of the population variances within the 64 model-task cells,
\[
\sigma^2_{\mathrm{within}}
= \frac{1}{MT}\sum_{m,t}\mathrm{Var}(X\mid m,t).
\]
The total variance is the sum of these four components. The table reports the square root of each component so that every entry remains in the natural units of the corresponding metric. Each model-task cell receives equal weight regardless of the number of traces in that cell.

\paragraph{Table~\ref{tab:objective_profiles}: objective-conditioned dynamical profiles.}
The trace-level data are first aggregated to one row for each of the 64 default model-task conditions. Mean CoT length is the mean number of generated tokens. CoT length is log transformed before standardization. Endpoint accuracy, log CoT length, and each profile metric are then standardized separately within each task across the sixteen model conditions using population standard deviation.

The \emph{High accuracy} group is the 25\% of all model-task pairs with the largest within-task standardized accuracy. The \emph{Short CoT} group is the 25\% with the smallest within-task standardized log CoT length. The reported profile entries are the average standardized values of the seven trajectory metrics within each selected group, where each metric is expressed relative to the mean and standard deviation for that task. The final two columns report unstandardized mean endpoint accuracy and unstandardized mean CoT length for interpretability.

\paragraph{Figure~\ref{fig:training_inference_dynamics}: training and inference interventions.}
Figure~\ref{fig:training_inference_dynamics} operates on model-task condition means. For instruction tuning, we compare six matched base and instruction-tuned model pairs: Gemma-2-2B vs.\ Gemma-2-2B-IT, Gemma-2-9B vs.\ Gemma-2-9B-IT, LLaMA-3.2-3B vs.\ LLaMA-3.2-3B-IT, LLaMA-3.1-8B vs.\ LLaMA-3.1-8B-IT, Qwen-2.5-3B vs.\ Qwen-2.5-3B-IT, and Qwen-2.5-14B vs.\ Qwen-2.5-14B-IT. Each pair is evaluated on all four tasks. For model scale, we compare Gemma-2-2B vs.\ Gemma-2-9B, Gemma-2-2B-IT vs.\ Gemma-2-9B-IT, LLaMA-3.2-3B vs.\ LLaMA-3.1-8B, LLaMA-3.2-3B-IT vs.\ LLaMA-3.1-8B-IT, Qwen-2.5-3B vs.\ Qwen-2.5-14B, and Qwen-2.5-3B-IT vs.\ Qwen-2.5-14B-IT, again evaluated on all four tasks. For decoding temperature, we use GSM8K runs from DeepSeek-R1, Gemma-2-2B-IT, Qwen-2.5-3B, and Qwen-2.5-3B-IT, comparing $T=0.2$ with $T=1.0$.

For every task-specific matched comparison and metric, the raw change is defined as the intervened condition minus the reference condition. To place metrics with different units on a common horizontal scale, each raw change is divided by the sample standard deviation of that metric across the 64 default-decoding model-task condition means. For instruction tuning and model scale, these standardized changes are then averaged across the four tasks within each matched model pair. Each plotted point therefore represents one matched model-pair comparison averaged across tasks. For temperature, each point represents one model comparison on GSM8K.

The diamond is the arithmetic mean across matched model comparisons. Error bars are two-sided 95\% confidence intervals computed as the mean plus or minus the Student-$t$ critical value times the standard error, with degrees of freedom $n-1$. Thus, $n=6$ for the instruction-tuning and model-scale panels and $n=4$ for the temperature panel. Vertical jitter is visual only and is controlled by a fixed random seed. The three panels share a symmetric x-axis range so that effect magnitudes are visually comparable across instruction tuning, scale, and temperature.

%% file: AppendixC_proofs.tex
\section{Proofs}
\label{app:proofs}

\begin{proof}[Proof of Proposition~\ref{prop:projection}]
The endpoint prediction \(E(\mathcal{T})\) depends only on the terminal state \(p_K\), while each component of \(\mathcal{H}(\mathcal{T})\) depends only on the corresponding state \(p_k\). Therefore both are uniquely determined by \(\mathcal{T}\).
\end{proof}

\begin{proof}[Proof of Proposition~\ref{prop:nonidentifiability}]
Both claims can be witnessed on the binary answer space \(\mathcal{A}=\{a,b\}\).

For the first claim, consider
\[
\mathcal{T}_1:
(0.9,0.1)
\rightarrow
(0.9,0.1)
\rightarrow
(0.9,0.1),
\]
and
\[
\mathcal{T}_2:
(0.1,0.9)
\rightarrow
(0.4,0.6)
\rightarrow
(0.9,0.1).
\]
Both terminate with \(a\) as the dominant answer, and therefore
\[
E(\mathcal{T}_1)=E(\mathcal{T}_2)=a.
\]
Nevertheless, \(\mathcal{T}_1\) maintains the same dominant hypothesis throughout, whereas \(\mathcal{T}_2\) revises from \(b\) to \(a\). Hence the endpoint does not identify the preceding distribution dynamics.

For the second claim, take
\[
p=(0.9,0.1),
\qquad
q=(0.1,0.9),
\]
and consider the constant trajectories
\[
\mathcal{T}_1 : p \to p \to p,
\qquad
\mathcal{T}_2 : q \to q \to q.
\]
Since entropy is invariant under permutation of coordinates,
\[
H(p)=H(q),
\]
and therefore
\[
\mathcal{H}(\mathcal{T}_1)
=
\mathcal{H}(\mathcal{T}_2).
\]
However, \(\mathcal{T}_1 \neq \mathcal{T}_2\), since \(p\neq q\). Hence entropy profiles do not identify which hypotheses carry the predictive mass.
\end{proof}

%% file: AppendixD_licences_impact_statement.tex
\section{Licenses}
\label{subsec:licenses}

We do not introduce or release any new datasets or model checkpoints. All experiments use publicly released benchmark datasets and open-weight model checkpoints under their respective licenses. We use these artifacts only for academic evaluation and do not redistribute the original dataset contents or model weights. Table~\ref{tab:licenses} summarizes the licenses and usage terms for all artifacts used in our experiments.

\begin{table}[h]
\centering
\small
\caption{Licenses and usage terms for datasets and open-weight model checkpoints used in our experiments.}
\label{tab:licenses}
\resizebox{\textwidth}{!}{%
\begin{tabular}{lll}
\toprule
Artifact & Source & License / terms \\
\midrule
GSM8K & OpenAI & \href{https://huggingface.co/datasets/choosealicense/licenses/blob/main/markdown/cc-by-sa-4.0.md}{MIT License} \\
ARC & AllenAI AI2 ARC & \href{https://huggingface.co/datasets/choosealicense/licenses/blob/main/markdown/cc-by-sa-4.0.md}{CC BY-SA 4.0} \\
SVAMP & Patel et al. / HF mirror & \href{https://huggingface.co/datasets/choosealicense/licenses/blob/main/markdown/cc-by-sa-4.0.md}{MIT License} \\
MATH & Hendrycks et al. & \href{https://huggingface.co/datasets/choosealicense/licenses/blob/main/markdown/cc-by-sa-4.0.md}{MIT License} \\
\midrule
Gemma-2-2B, Gemma-2-9B & Google & \href{https://ai.google.dev/gemma/terms}{Gemma Terms of Use} \\
LLaMA-3.2-3B & Meta & \href{https://www.llama.com/llama3_2/license/}{Llama 3.2 Community License} \\
LLaMA-3.1-8B & Meta & \href{https://www.llama.com/llama3_1/license/}{Llama 3.1 Community License} \\
Qwen-2.5-3B & Qwen & \href{https://huggingface.co/Qwen/Qwen2.5-3B/blob/main/LICENSE}{Qwen Research} \\
Qwen-2.5-14B & Qwen & \href{https://huggingface.co/Qwen/Qwen2.5-14B/blob/main/LICENSE}{Apache 2.0} \\
Qwen-2.5-Math-1.5B & Qwen & \href{https://huggingface.co/Qwen/Qwen2.5-Math-1.5B/blob/main/LICENSE}{Apache 2.0} \\
DeepSeek-Chat-7B & DeepSeek & \href{https://github.com/deepseek-ai/DeepSeek-LLM/blob/main/LICENSE-MODEL}{DeepSeek Model License}; \href{https://github.com/deepseek-ai/DeepSeek-LLM/blob/main/LICENSE-CODE}{code under MIT} \\
DeepSeek-R1-distilled-7B & DeepSeek & \href{https://huggingface.co/datasets/choosealicense/licenses/blob/main/markdown/mit.md}{MIT License} \\
Yi-1.5-34B & 01.AI & \href{https://huggingface.co/datasets/choosealicense/licenses/blob/main/markdown/apache-2.0.md}{Apache 2.0} \\
\bottomrule
\end{tabular}
}
\end{table}

\section{Impact Statement}
\label{app:broader_impacts}
This paper aims to advance the field of Machine Learning. While our work has potential societal implications, we do not identify any specific concerns that require particular emphasis at this stage.